\newtheorem{theorem}{Theorem}
\newtheorem{lemma}[theorem]{Lemma}
\tikzset{graphnode/.style={circle, draw, minimum size=0.6cm, inner sep=1pt, font=\small}}
\title{Unsupervised Ordering for Maximum Clique}
\author{%
  Yimeng Min \\
  Department of Computer Science\\
  Cornell University \\
  Ithaca, NY, USA \\
  \texttt{min@cs.cornell.edu} \\
  \And
  Carla P. Gomes\\
  Department of Computer Science\\
  Cornell University \\
  Ithaca, NY, USA \\
  \texttt{gomes@cs.cornell.edu}\\
}
\begin{document}

\maketitle

\begin{abstract}
We propose an unsupervised approach for learning vertex orderings for the maximum clique problem by framing it within a permutation-based framework.
We transform the combinatorial constraints into geometric relationships such that the ordering of vertices aligns with the clique structures.  By integrating this clique-oriented ordering into branch-and-bound search, we improve search efficiency and reduce the number of computational steps. Our results demonstrate how unsupervised learning of vertex ordering can enhance search efficiency across diverse graph instances. We further study the generalization across different sizes.
\end{abstract}

\section{Introduction}
Unsupervised Learning (UL) is an emerging paradigm for solving Combinatorial Optimization (CO) problems. While Supervised Learning (SL) requires expensive labelled data, and Reinforcement Learning (RL) struggles with sparse rewards and high training variance, leading to unstable performance, UL offers a promising alternative~\cite{min2024unsupervised}.

The Maximum Clique Problem (MCP) is of fundamental importance in graph theory and combinatorial optimization, with significant theoretical and practical implications. Formally, given an undirected graph $G(V, E)$, where $V$ is the set of vertices and $E$ is the set of edges, the MCP seeks the largest subset $ C \subseteq V $ such that $ \forall u, v \in C, \{u, v\} \in E $. In other words, the induced subgraph $ G[C] $ is a complete graph, and the goal is to maximize $ |C| $, the cardinality of the clique. The MCP has wide-ranging applications, including social network analysis, where it helps uncover tightly connected communities, and bioinformatics, where it is used to identify dense clusters in protein interaction networks~\cite{bomze1999maximum}.

Exact algorithms for the MCP primarily follow the branch-and-bound framework. Among these, a common strategy is to color the vertices in a specific order  for computing upper bounds and guiding vertex selection~\cite{tomita2003efficient,tomita2010simple,san2011exact,konc2007improved}. Other methods include iterative deepening with sub-clique information, or use MaxSAT-based bounding~\cite{wu2015review}.
However, these algorithms mainly rely on hand-crafted features to design effective pruning rules and branching strategies. Recently, there has been a paradigm shift towards data-driven approaches, where machine learning techniques are employed to build efficient search strategies. Among these data-driven methods, UL shows particular promise because it can leverage the inherent structural patterns in graphs without requiring expensive labelled training data.

Several approaches have tackled the MCP  using UL by framing it as a binary classification task~\cite{karalias2020erdos}. Recent advances have focused on two key areas: developing more sophisticated graph neural network architectures and designing novel loss functions~\cite{karalias2022neural}. These approaches aim to learn a function $f_\theta: G(V, E) \rightarrow [0,1]^n$ that maps an input graph to vertex-level probabilities, optimizing the model to identify vertices that belong to the maximum clique.






Here, we propose an alternative approach that learns vertex ordering rather than binary assignments for MCP.  Consider the graph and its matrix representations shown in Figure~\ref{fig:graph-matrices}. Our goal is to identify potential cliques by reordering vertices. Given a graph with $n$ nodes and adjacency matrix $A \in \mathbb{R}^{n \times n}$,  the matrix $\mathbf{M}(A) = J - I - A$ represents non-adjacent vertex pairs with 1s and adjacent pairs with 0s, where $J \in \mathbb{R}^{n \times n}$ is the all-ones matrix and $I \in \mathbb{R}^{n \times n}$ is the identity matrix. 
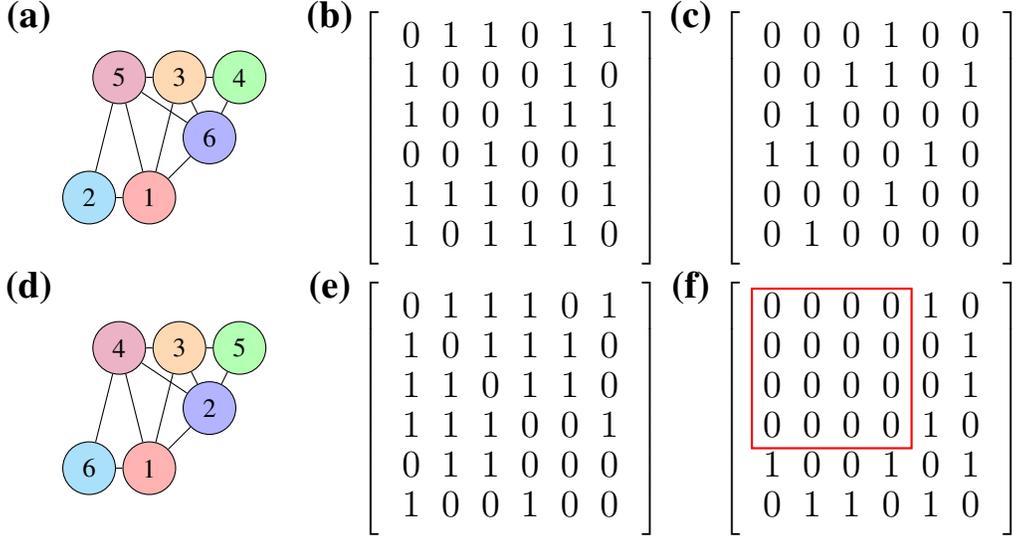
\begin{figure}[htbp]
    \centering
    \begin{tikzpicture}[scale=0.8]

\tikzset{graphnode/.style={circle, draw, minimum size=0.7cm, inner sep=1.pt, font=\normalsize}}

\node at (-7,3) {\textbf{\Large{(a)}}};
\node at (-2,3) {\textbf{\Large{(b)}}};
\node at (4,3) {\textbf{\Large{(c)}}};
\node at (-7,-1.5) {\textbf{\Large{(d)}}};
\node at (-2,-1.5) {\textbf{\Large{(e)}}};
\node at (4,-1.5) {\textbf{\Large{(f)}}};
\begin{scope}[shift={(-5,1)}]
    \node[graphnode, fill=red!30] (v1) at (0,-1) {1};
    \node[graphnode, fill=blue!30] (v2) at (1,0) {6};
    \node[graphnode, fill=green!30] (v3) at (1.5,1) {4};
    \node[graphnode, fill=orange!30] (v4) at (0.5,1) {3};
    \node[graphnode, fill=purple!30] (v5) at (-0.5,1) {5};
    \node[graphnode, fill=cyan!30] (v6) at (-1,-1) {2};
    \draw (v1) -- (v2);
    \draw (v2) -- (v3);
    \draw (v4) -- (v5);
    \draw (v5) -- (v1);
    \draw (v5) -- (v6);
    \draw (v6) -- (v1);
    \draw (v1) -- (v4);
    \draw (v2) -- (v4);
    \draw (v2) -- (v5);
    \draw (v3) -- (v4);
\end{scope}

\begin{scope}[shift={(1,1)}]
    \matrix (matf) [matrix of math nodes,
             nodes in empty cells,
             nodes={minimum width=1.5em, minimum height=1.5em, anchor=center, text height=1.5ex, font=\Large},
             left delimiter={[},
             right delimiter={]},
             ]  {
0 & 1 & 1 & 0 & 1 & 1 \\
1 & 0 & 0 & 0 & 1 & 0 \\
1 & 0 & 0 & 1 & 1 & 1 \\
0 & 0 & 1 & 0 & 0 & 1 \\
1 & 1 & 1 & 0 & 0 & 1 \\
1 & 0 & 1 & 1 & 1 & 0\\
    };
\end{scope}

\begin{scope}[shift={(7,1)}]
    \matrix (matf) [matrix of math nodes,
             nodes in empty cells,
             nodes={minimum width=1.5em, minimum height=1.5em, anchor=center, text height=1.5ex, font=\Large},
             left delimiter={[},
             right delimiter={]},
             ] 
    {
        0 & 0 & 0 & 1 & 0 & 0 \\
        0 & 0 & 1 & 1 & 0 & 1\\
        0 & 1 & 0 & 0 & 0 & 0 \\
        1 & 1 & 0 & 0 & 1 & 0 \\
        0 & 0 & 0 & 1 & 0 & 0 \\
        0 & 1 & 0 & 0 & 0 & 0 \\
    };
\end{scope}

\begin{scope}[shift={(-5,-3.5)}]
    \node[graphnode, fill=red!30] (v1) at (0,-1) {1};
    \node[graphnode, fill=blue!30] (v2) at (1,0) {2};
    \node[graphnode, fill=green!30] (v3) at (1.5,1) {5};
    \node[graphnode, fill=orange!30] (v4) at (0.5,1) {3};
    \node[graphnode, fill=purple!30] (v5) at (-0.5,1) {4};
    \node[graphnode, fill=cyan!30] (v6) at (-1,-1) {6};
    \draw (v1) -- (v2);
    \draw (v2) -- (v3);
    \draw (v4) -- (v5);
    \draw (v5) -- (v1);
    \draw (v5) -- (v6);
    \draw (v6) -- (v1);
    \draw (v1) -- (v4);
    \draw (v2) -- (v4);
    \draw (v2) -- (v5);
    \draw (v3) -- (v4);
\end{scope}

\begin{scope}[shift={(1,-3.5)}]
    \matrix (matf) [matrix of math nodes,
             nodes in empty cells,
             nodes={minimum width=1.5em, minimum height=1.5em, anchor=center, text height=1.5ex, font=\Large},
             left delimiter={[},
             right delimiter={]},
             ] 
    {
0 & 1 & 1 & 1 & 0 & 1 \\
1 & 0 & 1 & 1 & 1 & 0 \\
1 & 1 & 0 & 1 & 1 & 0 \\
1 & 1 & 1 & 0 & 0 & 1 \\
0 & 1 & 1 & 0 & 0 & 0 \\
1 & 0 & 0 & 1 & 0 & 0 \\
    };
\end{scope}

\begin{scope}[shift={(7,-3.5)}]
    \matrix (matf) [matrix of math nodes,
             nodes in empty cells,
             nodes={minimum width=1.5em, minimum height=1.5em, anchor=center, text height=1.5ex, font=\Large},
             left delimiter={[},
             right delimiter={]},
             ]   {
        0 & 0 & 0 & 0 & 1 & 0 \\
        0 & 0 & 0 & 0 & 0 & 1 \\
        0 & 0 & 0 & 0 & 0 & 1 \\
        0 & 0 & 0 & 0 & 1 & 0 \\
        1 & 0 & 0 & 1 & 0 & 1 \\
        0 & 1 & 1 & 0 & 1 & 0 \\
    };
    \begin{scope}[on background layer]
        \draw[red, thick] (matf-1-1.north west) rectangle (matf-4-4.south east);
    \end{scope}
\end{scope}

\end{tikzpicture}
    \caption{Graph representations and their corresponding matrices. (a) The original graph, (b) the corresponding adjacency matrix $A$, (c) $\mathbf{M}(A) = J - I - A$ (where $J$ is the all-ones matrix and $I$ is the identity matrix); (d) graph (a) with reordered nodes, (e) the corresponding adjacency matrix $A'$, (f) $\mathbf{M}(A') = J - I - A'$.}
    \label{fig:graph-matrices}
\end{figure}

Now, consider the two different vertex orderings illustrated in Figure~\ref{fig:graph-matrices} (a) and (d). Their corresponding adjacency matrices, denoted as $A$ and $A'$, are shown in Figure~\ref{fig:graph-matrices} (b) and (e). The matrices $\mathbf{M}(A)$ and $\mathbf{M}(A')$ are presented in Figure~\ref{fig:graph-matrices} (c) and (f), respectively. In $\mathbf{M}(A')$, adjacent vertices (represented by 0s) are successfully clustered in the upper-left corner, as highlighted by the red box. This clustering effectively reveals potential clique members, since vertices within a clique must be adjacent to each other, corresponding to the concentrated region of 0s in $\mathbf{M}(A')$. Thus, if we can find an optimal vertex ordering that places the clique nodes at the front, the clique structures will be revealed by the concentrated pattern of 0s in the transformed matrix $\mathbf{M}(A')$.

The reordering can be formally expressed as $\mathbf{M}(A') = \mathbf{P}^T\mathbf{M}(A)\mathbf{P}$, where $\mathbf{P} \in \mathbb{R}^{n \times n}$ is a permutation matrix. This formulation allows us to optimize the ordering of the vertices directly through a permutation matrix $\mathbf{P}$, from which we can extract the ordering of the vertices in the maximum clique. This permutation framework fundamentally differs from previous UL approaches. While previous methods encode clique constraints as penalty terms for binary classification, learning node-level probabilities, our framework learns relative node orderings that reveal clique structures. This shift from local classification (binary classification) to global structural relationships (ordering) enables the direct capture of inter-node correlations through permutation patterns.

In this paper, we transform the discrete combinatorial problem into a continuous geometric optimization using Chebyshev-based distances, which allows the model to capture clique structural relationships between nodes.  We integrate UL with branch-and-bound algorithms, resulting in improved computational efficiency, especially for large, dense graphs.  Our method is able to generalize across sizes, with inference overhead diminishing as graph size increases and outperforming traditional degree-based ordering.  Our approach extends beyond binary classification, revealing how UL can learn fundamental combinatorial structures, suggesting broader applications in CO.

\section{Background}
\subsection{Branch-and-Bound for Maximum Clique}
The branch-and-bound (BnB) approach has been one of the most effective exact methods for solving MCP, with its performance largely determined by two key components: the vertex selection strategy and the upper bound computation. The algorithm incrementally constructs a clique by recursively selecting vertices while leveraging bounds to prune infeasible branches. A crucial factor in its efficiency is the use of heuristics such as \textit{degree-based vertex ordering} and \textit{coloring-based bounds}, which have been widely adopted in BnB frameworks~\cite{san2011exact,tomita2003efficient,konc2007improved,li2017minimization}. These techniques—ranging from greedy coloring bounds to efficient vertex selection—have significantly influenced subsequent advances~\cite{mccreesh2013multi,san2016new}.

\subsection{Unsupervised Learning for Vertex Ordering}
The most relevant UL work on graph ordering for combinatorial optimization is UL for Travelling Salesman Problem (TSP), as explored by~\cite{min2023unsupervised,min2024unsupervised}. The goal of TSP is to find the shortest Hamiltonian cycle. \cite{min2023unsupervised} use a Graph Neural Network (GNN) to construct a soft permutation matrix $\mathbb{T} \in \mathbb{R}^{n \times n}$ and optimize the following loss:
\begin{equation}\label{eq:tsploss}
\mathcal{L}_{\text{TSP}} = \langle \mathbb{T}  \mathbb{V} \mathbb{T}^T  , \mathbf{D}_{\text{TSP}} \rangle,
\end{equation}
where $\mathbb{V}$ represents a Hamiltonian cycle from node $1 \rightarrow 2 \rightarrow ... \rightarrow n \rightarrow 1$, and $\mathbf{D}_{\text{TSP}}$ is the distance matrix with self-loop distances set as $\lambda$. Essentially, $\mathbb{T}$ is an approximation of a hard permutation matrix $\mathbf{P} \in S_n$. Since the Hamiltonian cycle constraint holds under any permutation and the order is equivalent with respect to the permutation, optimizing Equation~\ref{eq:tsploss} serves as a proxy for solving the TSP, incorporating both the shortest path and Hamiltonian cycle constraints. In other words, the order of vertices in the Hamiltonian cycle is determined by the permutation matrix and we aim to find the one that minimizes the total distance.

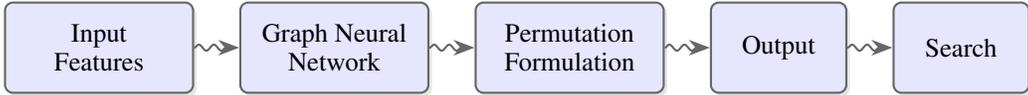
\begin{figure}[htbp]
    \centering
    \usetikzlibrary{shapes.geometric,arrows.meta,positioning,shadows,fit,decorations.pathmorphing}


\begin{tikzpicture}[
    fancy box/.style={
        rectangle,
        rounded corners=3pt,
        draw=black!60,
        fill=blue!10,
        thick,
        minimum width=2.5cm,
        minimum height=1.2cm,
        inner sep=6pt,
        drop shadow={shadow xshift=0.3ex,shadow yshift=-0.3ex,fill=gray!40},
        align=center
    },
    narrow box/.style={
        rectangle,
        rounded corners=3pt,
        draw=black!60,
        fill=blue!10,
        thick,
        minimum width=1.8cm,
        minimum height=1.2cm,
        inner sep=6pt,
        drop shadow={shadow xshift=0.3ex,shadow yshift=-0.3ex,fill=gray!40},
        align=center
    },
    fancy arrow/.style={
        ->,
        thick,
        decoration={snake, amplitude=.4mm, segment length=2mm, post length=1mm},
        decorate,
        -{Stealth[length=8pt]},
        black!60
    }
]

    \node[fancy box] (input) {Input\\Features};
    \node[fancy box, right=0.6cm of input] (gnn) {Graph Neural\\Network};
    \node[fancy box, right=0.6cm of gnn] (perm) {Permutation\\Formulation};
    \node[narrow box, right=0.6cm of perm] (output) {Output};
    \node[narrow box, right=0.6cm of output] (search) {Search};

    \draw[fancy arrow] (input) -- (gnn);
    \draw[fancy arrow] (gnn) -- (perm);
    \draw[fancy arrow] (perm) -- (output);
    \draw[fancy arrow] (output) -- (search);
\end{tikzpicture}

    \caption{Overview of the unsupervised learning framework for TSP. The model takes graph features as input and processes them through a GNN. The objective is formulated within a permutation framework. The output provides a heat map that guides the subsequent search.}
    \label{fig:ULframework}
\end{figure}

To learn the soft permutation matrix $\mathbb{T}$, \cite{min2023unsupervised} use a GNN coupled with a Gumbel-Sinkhorn operator. The transformation $\mathbb{T} \mathbb{V} \mathbb{T}^T$ is a heat map representation that guides the subsequent search procedure, as shown in Figure~\ref{fig:ULframework}.

\section{Model}
In our model, the intuition and motivation are straightforward: we aim to learn a good vertex ordering to enhance BnB  search performance for MCP. As mentioned in Figure~\ref{fig:graph-matrices}, an effective ordering can reveal the hidden clique structure. While most existing search algorithms rely on degree-based vertex ordering, we propose incorporating a \textit{clique-oriented} vertex ordering to guide the search process.

\subsection{Learning}
\begin{figure}[htbp]
    \centering
\begin{tikzpicture}[scale=0.7]
    \definecolor{lightSquare}{RGB}{255,255,255}
    \definecolor{darkSquare}{RGB}{180,180,180}

    \foreach \i in {0,...,5} {
        \foreach \j in {0,...,5} {
            \pgfmathparse{int(mod(\i+\j,2))}
            \ifnum\pgfmathresult=1
                \fill[darkSquare] (\i,\j) rectangle +(1,1);
            \else
                \fill[lightSquare] (\i,\j) rectangle +(1,1);
            \fi
        }
    }

    \draw[thick] (0,0) rectangle (6,6);

    \node[scale=2] at (.5,5.5) {\symking};

    \foreach \x [count=\i] in {A,B,C,D,E,F} {
        \node[below] at (\i-0.5,0) {\small\x};
    }
    \foreach \i in {1,...,6} {
        \node[left] at (0,\i-0.5) {\small\i};
    }

\begin{scope}[shift={(10,3)}]
    \matrix (matf) [matrix of math nodes,
             nodes in empty cells,
             nodes={minimum width=1.5em, minimum height=1.5em, anchor=center, text height=1.8ex, font=\Large},
             left delimiter={[},
             right delimiter={]},
             ] 
    {
        0 & 1 & 2 & 3 & 4 & 5 \\
        1 & 1 & 2 & 3 & 4 & 5 \\
        2 & 2 & 2 & 3 & 4 & 5 \\
        3 & 3 & 3 & 3 & 4 & 5 \\
        4 & 4 & 4 & 4 & 4 & 5 \\
        5 & 5 & 5 & 5 & 5 & 5 \\
    };
        \node at (0,-2.8) {\large $C_6$};
        \node at (6,-2.8) {\large $5-C_6$};
\end{scope}

\begin{scope}[shift={(16,3)}]
    \matrix (matf) [matrix of math nodes,
             nodes in empty cells,
             nodes={minimum width=1.5em, minimum height=1.5em, anchor=center, text height=1.8ex, font=\Large},
             left delimiter={[},
             right delimiter={]},
             ] 
    {
        5 & 4 & 3 & 2 & 1 & 0 \\
        4 & 4 & 3 & 2 & 1 & 0 \\
        3 & 3 & 3 & 2 & 1 & 0 \\
        2 & 2 & 2 & 2 & 1 & 0 \\
        1 & 1 & 1 & 1 & 1 & 0 \\
        0 & 0 & 0 & 0 & 0 & 0 \\
    };
\end{scope}
\end{tikzpicture}
    \caption{(a): Visualization of a 6x6 chessboard with a king positioned at A6; (b) the Chebyshev distance matrix $C_6$, where each element represents the minimum number of moves required for a king to travel between corresponding squares; (c) $\overline{C_6} = 5 - C_6$, where the elements at top left have larger weights. $C_n[i,j] = \text{max}\{i,j\}$ - 1 and $\overline{C_n}[i,j] = n - 1- C_n = n - \text{max}\{i,j\}$}.
    \label{fig:Chebyshev}
\end{figure}
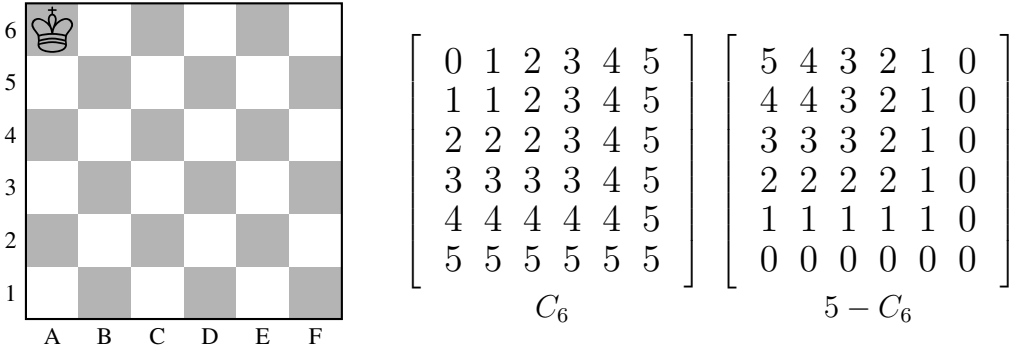 
We train our model to learn and generate  \textit{clique-oriented} ordering following the TSP  framework, as illustrated in Figure~\ref{fig:ULframework}. 
Our goal is to design a cost matrix $\mathbf{D}_\text{Clique}$ analogous to $\mathbf{D}_\text{TSP}$ that transforms the discrete constraint satisfaction problem into a continuous geometric optimization.  This transformation requires $\mathbf{D}_\text{Clique}$  to guide the vertices' reordering process, with the specific aim of clustering vertices in a way that reveals potential clique structures.

The key insight of our approach is to reorder vertices such that adjacent pairs are concentrated in specific regions of the matrix. This geometric perspective naturally leads to the Chebyshev distance matrix $C_n$ and its complement $\overline{C_n}$, as illustrated in Figure~\ref{fig:Chebyshev}. The Chebyshev distance is defined as the minimum number of moves a king piece requires to traverse a chessboard between two squares. For an $n \times n$ grid, we formalize this distance as $C_n[i,j] = \text{max}\{i,j\} - 1$, with its complement $\overline{C_n}[i,j] = n - \text{max}\{i,j\} $ assigning larger weights to elements in the upper-left region.

The Chebyshev distance matrix is crucial for our model to learn clique structures, as it will naturally guide the optimization to push adjacent vertex pairs ($0$ in $J-I-A$) toward the upper-left corner, effectively clustering potential clique members together. Furthermore, we can strengthen this geometric intuition by exponentially scaling the distance weights. Specifically, when we set $\mathbf{D}_\text{Clique} = (n^2)^{\overline{C_n}}$, minimizing $\mathcal{L}_{\text{Clique}}(P) = \langle \mathbf{P}^T (J-I-A) \mathbf{P}  , \mathbf{D}_\text{Clique}\rangle$ guarantees convergence to the optimal solution, where $\textbf{P} \in S_n$ denotes a hard permutation matrix. 
In practice, we set $\mathbf{D}_\text{Clique} = (1 + \epsilon)^{(\overline{C_n} - n/2)}$, where $\epsilon$ is a positive constant. This formulation maintains the exponential weighting scheme and provides better numerical stability.

We train our neural network to minimize the clique-specific objective:
\begin{equation}    \label{eq:obj}
\mathcal{L}_{\text{Clique}} = \langle \mathbb{T}^T (J-I-A) \mathbb{T}  , \mathbf{D}_\text{Clique}  \rangle,
\end{equation}
where $\mathbb{T}$ represents a soft permutation matrix.

While this formulation appears similar to the TSP objective in Equation~\ref{eq:tsploss}, there is a difference in the matrix multiplication order. The TSP formulation uses $\mathbb{T}\mathbb{V}\mathbb{T}^T$, whereas our clique formulation is $\mathbb{T}^T(J-I-A)\mathbb{T}$.  This distinction stems from different invariance requirements in the two problems. Let $\mathcal{H}_0 = \mathbb{T}_0 \mathbb{V}\mathbb{T}_0^T$ denote the initial heat map of TSP and $\mathbb{T}_0$ is the initial soft permutation matrix. For TSP, the permuted heat map should be equivariant under node reordering. When we apply a permutation matrix $\Pi$ to the original node ordering, our GNN's equivariance ensures $\mathbb{T} = \Pi\mathbb{T}_0$, resulting in a consistently transformed heat map $\Pi \mathcal{H}_0\Pi^T$.

In contrast, for the maximum clique problem, $\mathbb{T}_0^T(J-I-A)\mathbb{T}_0$ must remain invariant under node reordering. When we apply a permutation $\Pi$, the $J-I-A$ transforms as $J-I-\Pi A\Pi^T$. Due to our GNN's equivariance, $\mathbb{T} = \Pi\mathbb{T}_0$, making $(\Pi\mathbb{T}_0)^T(J-I-\Pi A\Pi^T)(\Pi\mathbb{T}_0)$ equal to the original $\mathbb{T}_0^T(J-I-A)\mathbb{T}_0$. This invariance is crucial as we aim to reorder adjacent pairs in the upper-left corner, regardless of the initial vertex ordering.

Overall, here we encode the MCP using the same framework as TSP, where discrete combinatorial constraints are transformed into continuous geometric optimization through matrix operations $\mathbb{T}\mathbb{V}\mathbb{T}^T$ and  $\mathbb{T}^T(J-I-A)\mathbb{T}$ with distance matrices $\mathbf{D}_\text{TSP}$ and $\mathbf{D}_\text{Clique}$, respectively.

\subsection{Search}
As mentioned, \textit{degree-based vertex ordering} and \textit{coloring-based bound}s are widely adopted in BnB  frameworks for solving the MCP~\cite{tomita2003efficient,konc2007improved,li2010combining,san2011exact,wu2015review,li2017minimization}. Among these methods, MaxCliqueDyn\cite{konc2007improved}, an improved version of Tomita et al.'s algorithm~\cite{tomita2003efficient}, is a well-established exact solver that we adopt as our baseline to compare degree-based and UL-based vertex reordering methods.
Most BnB  methods for the MCP use the MaxCliqueDyn paradigm, which maintains a candidate set of vertices and recursively selects them based on degree or color to construct potential cliques while employing coloring-based bounds for pruning. Extensions such as MaxCliqueDyn+EFL+SCR~\cite{li2010combining} integrate failed literal detection and soft clause relaxation but retain MaxCliqueDyn's core structure.  

Building upon this representative model, we aim to learn vertex ordering  directly from graph data to guide the BnB  search,  as an alternative to the traditional degree-based ordering used in MaxCliqueDyn.

\subsubsection{MaxCliqueDyn}
MaxCliqueDyn uses dynamic bound adjustment to efficiently solve the MCP. The algorithm maintains two key sets: $Q$ for the current growing clique and $Q_{max}$ for the best solution found. Step counters $S[level]$ and $S_{old}[level]$ track search progress.

The algorithm combines several optimization strategies: non-increasing degree ordering for initial bounds, dynamic step counting for adaptive bound adjustment, and the \texttt{ColorSort} algorithm for maintaining vertex ordering properties. By applying  bound calculations selectively near the root of the search tree, MaxCliqueDyn achieves significant performance improvements on dense graphs while preserving efficiency on sparse instances~\cite{konc2007improved}.

At the beginning, MaxCliqueDyn sorts vertices in non-increasing degree order and assigns the first $\Delta(G)$ vertices colors $1$ through $\Delta(G)$ and the remaining vertices color $\Delta(G)+1$, where $\Delta(G)$ is the maximum degree in $G$. This provides a computationally efficient starting point that supports the algorithm's dynamic bound calculations throughout the search process~\cite{tomita2003efficient}, the algorithm is shown in Algorithm~\ref{alg:maxcliquedyn}. This initial coloring strategy, though simple, establishes a valid starting point for the BnB  process. Rather than investing heavily in an optimal initial coloring, it uses this basic coloring that improves automatically through the \texttt{ColorSort}. In practice, this simple initial coloring achieves a balance between computation time and reduction in search space~\cite{tomita2003efficient,konc2007improved}.

\begin{algorithm}[ht]
\caption{MaxCliqueDyn: Maximum Clique Algorithm with Dynamic Upper Bounds}
\label{alg:maxcliquedyn}
\begin{algorithmic}[1]
\Require Graph $G=(V,E)$, candidate set $R \subseteq V$, coloring $C$, depth $level$
\Ensure Maximum clique in $G$

\State Initialize $Q \gets \emptyset$, $Q_{max} \gets \emptyset$, $S[level] \gets 0$, $S_{old}[level] \gets 0$ \Comment{Cliques and steps}
\State $ALL\_STEPS \gets 1$, $T_{limit} \gets 0.025$ \Comment{Step counter and threshold}
\State Sort $V$ in a  non-increasing order
with respect to their degrees; color first $\Delta(G)$ vertices $1,\ldots,\Delta(G)$, rest $\Delta(G)+1$ \Comment{Degree-based init coloring  }

\Procedure{MaxCliqueDyn}{$R$, $C$, $level$}
    \State $S[level] \gets S[level] + S[level-1] - S_{old}[level]$ \Comment{Step count}
    \State $S_{old}[level] \gets S[level-1]$ \Comment{Save old count}
    
    \While{$R \neq \emptyset$}
        \State $p \gets \arg\max_{v \in R} C(v)$ \Comment{Best remaining vertex}
        \State $R \gets R \setminus \{p\}$ \Comment{Remove vertex}
        
        \If{$|Q| + C[index\_of\_p\_in\_R] > |Q_{max}|$} \Comment{Promising bound}
            \State $Q \gets Q \cup \{p\}$ \Comment{Add to clique}
            
            \If{$R \cap \Gamma(p) \neq \emptyset$} \Comment{Has neighbors}
                \If{$S[level]/ALL\_STEPS < T_{limit}$} \Comment{Near root}
                    \State Compute degrees in $G(R \cap \Gamma(p))$ \Comment{Better bounds}
                    \State Sort $R \cap \Gamma(p)$ by  non-increasing degree \Comment{Order by potential}
                \EndIf
                
                \State $C' \gets \text{ColorSort}(R \cap \Gamma(p))$ \Comment{Color subgraph}
                \State $S[level] \gets S[level] + 1$ \Comment{Count step}
                \State $ALL\_STEPS \gets ALL\_STEPS + 1$ \Comment{Update total}
                
                \State \Call{MaxCliqueDyn}{$R \cap \Gamma(p)$, $C'$, $level+1$} \Comment{Recurse}
            \Else
                \If{$|Q| > |Q_{max}|$} \Comment{New best}
                    \State $Q_{max} \gets Q$ \Comment{Update max}
                \EndIf
            \EndIf
            
            \State $Q \gets Q \setminus \{p\}$ \Comment{Backtrack}
        \Else
            \State \Return \Comment{Prune branch}
        \EndIf
    \EndWhile
\EndProcedure
\end{algorithmic}
\end{algorithm}
\subsubsection{Coloring}
The \texttt{ColorSort} procedure plays a crucial role in the BnB framework by providing increasingly refined upper bounds through an approximate graph coloring. Following~\cite{konc2007improved}, \texttt{ColorSort} first computes 
$k_{\min} = |Q_{\max}| - |Q| + 1,$ which represents the minimum required colors for potential improvements to the current best clique. It then assigns vertices to color classes based on their adjacency relationships, where vertices receiving colors $ k < k_{\min}$ are maintained in their original positions, while vertices with colors $ k \geq k_{\min} $ are reordered based on their assigned colors, we refer more details to the MaxCliqueDyn paper~\cite{konc2007improved}.

\subsection[From Soft Permutation T to Hard Permutation P]{From Soft Permutation $\mathbb{T}$ to Hard Permutation $\mathbf{P} $}
To transform the GNN output into a hard permutation matrix $\textbf{P}$, we employ a differentiable sorting operation. Specifically, we apply the Gumbel-Sinkhorn operator to the GNN's output, which is a continuous relaxation of the permutation during training while allowing us to obtain a hard permutation matrix during inference through the Hungarian algorithm~\cite{mena2018learning}. This permutation matrix $\textbf{P}$ is then used to reorder the input vertices, partitioning likely clique nodes together.

In our GNN model, each node has two input features: (1) local density, calculated as the ratio of existing edges to possible edges in the node's neighborhood, and (2) node degree.
Our model first generates logits which are transformed by a scaled tanh activation: \begin{equation}
\mathcal{F} = \alpha \tanh(f_{\text{GNN}}(f_0, A))
\end{equation}
where $f_0 \in \mathbb{R}^{n \times 2}$ is the initial feature matrix and $A \in \mathbb{R}^{n \times n}$ is the adjacency matrix. The learned features are transformed into logits which are scaled by $\tanh$ with factor $\alpha$. These scaled logits are then passed through the Gumbel-Sinkhorn operator to build a differentiable approximation of a permutation matrix:
\begin{equation}
\mathbb{T} = \text{GS}(\frac{\mathcal{F} + \gamma \times \text{Gumbel noise} }{\tau}, l),
\end{equation}
where $\gamma$ is the scale of the Gumbel noise, $\tau$ is the temperature parameter, and $l$ is the number of Sinkhorn iterations. During inference,  the logits $\mathcal{F}$ are then directly converted to a hard permutation matrix using the Hungarian algorithm: $\mathbf{P} = \text{Hungarian}(-\frac{\mathcal{F} + \gamma \times \text{Gumbel noise} }{\tau})$.

\subsection{Search with Clique-oriented Ordering}
After obtaining the hard permutation matrix $\textbf{P}$, we reorder the vertices according to this permutation to build what we refer to as the learned \textit{clique-oriented} vertex ordering.

To enhance MaxCliqueDyn's efficiency, we propose replacing the traditional degree-based ordering (line 3 in Algorithm~\ref{alg:maxcliquedyn}) with our clique-oriented vertex ordering learned through UL.  To maintain BnB correctness, we follow~\cite{konc2007improved,tomita2003efficient} by coloring the first $\Delta(G)$ vertices with unique colors from $1$ to $\Delta(G)$ and assigning all remaining vertices color $\Delta(G) + 1$. Since MaxCliqueDyn selects vertices with the highest color label first, this ordering means non-clique vertices are evaluated earlier in the search process, allowing the algorithm to establish good candidate cliques during initial phases. These discovered cliques then serve as effective lower bounds as the search progresses to the potential clique vertices later in the sequence, enabling more aggressive pruning of the search space, thus leading to fewer total steps and faster execution.

In practice, we observe that although the subsequent \texttt{ColorSort} procedure in MaxCliqueDyn will modify the initial vertex ordering, the vertices with maximum colors $C(p)$ (which are selected for subsequent procedures) tend to maintain a strong correlation with their initial positions in our clique-oriented ordering. This means that vertices that we initially identified as likely clique members, despite being reordered by \texttt{ColorSort} and $R \cap \Gamma(p)$, still tend to be processed later in the search process. This delayed processing of potential clique vertices aligns with our original strategy. Thus, the benefits of our clique-oriented ordering persist.

\section{Experiments}
\subsection{Training}\label{sec:training}
Our dataset consists of Erd\H{o}s-Rényi (ER) graphs with sizes $n \in \{100, 200\}$ and edge probabilities $p \in \{0.1, 0.2, \dots, 0.9\}$. For each combination of size and probability, we generate 50,000 training graphs, 10,000 validation graphs, and 10,000 test graphs. We train our GNN using the Adam optimizer with learning rate $0.0001$ for 100 epochs per graph configuration. The model architecture uses a two-layer Scattering Attention GNN (SAG)~\cite{min2022can} with 6 scattering and 3 low-pass channels, with hidden dimension 128 for $n=100$ and 256 for $n=200$. The $\tanh$ scale is set to $\alpha=40$. We conducted experiments using a NVIDIA H100 Graphics Processing Unit (GPU) and an Intel Xeon Gold 6154 Central Processing Unit (CPU).

\subsection{Results}
In our experiments on $n=100$, we vary the temperature parameter $\tau \in \{1, 2, 3, 4, 5\}$ and the noise scale $\gamma \in \{0.01, 0.02, 0.03, 0.04, 0.05\}$, while fixing $\epsilon=0.2$ and $l=20$ for all edge probabilities; on $n=200$,  we use the same variations for  $\tau$ and $\gamma$, but set $\epsilon$ to either $0.06$ or  $\epsilon=0.08$, with $l=10$ for all edge probabilities. We then select the model with fastest inference time on the validation set. The results on the test data are shown in Table~\ref{tab:table1} and ~\ref{tab:table2}.
\begin{table}[ht]
\centering
\setlength{\tabcolsep}{5.0pt}
\caption{Comparison of MaxCliqueDyn with three orderings (Random, Clique-oriented, and Degree Sort) on random graphs with $n=100$ vertices and varying edge probabilities $p$. Each value represents the average over 100 random instances. We report the number of steps and computation time (in seconds) for each algorithm. The Clique-oriented approach includes an additional inference overhead. The maximum clique size $\omega$ is reported in the last column.}
\label{tab:table1}
\begin{tabular}{@{}lllllllllll@{}}
    \toprule
 & \multicolumn{2}{c}{Random} & \multicolumn{3}{c}{Clique-oriented} & \multicolumn{2}{c}{Degree Sort} & \multicolumn{1}{c}{} \\ 
$p$ & Steps & Time (s) & Steps & Time & Inference (s) & Steps & Time (s) & $\omega$ \\  \midrule   
$0.1$ & \bf 94.25 & 7.799e-5 & 97.22 & \underline{7.290e-5} & \it 6.357e-5 + 9.424e-4 & 98.45 & 7.321e-5 & 3.962 \\
$0.2$ & 110.9 & 9.900e-5 & \bf 107.8 & 9.663e-5 & \it 6.323e-5 + 1.030e-3 & 108.6 & \underline{9.437e-5} & 5.022 \\ 
$0.3$ & 159.0 & 1.480e-4 & \bf 139.7 & \underline{1.330e-4} & \it 6.402e-5 + 9.302e-4 & 143.6 & 1.380e-4 & 6.122 \\ 
$0.4$ & 284.7 & 2.565e-4 & \bf 245.7 & \underline{2.192e-4} & \it 6.379e-5 + 7.107e-4 & 252.4 & 2.296e-4 & 7.514 \\
$0.5$ & 535.1 & 5.042e-4 & \bf 434.3 & \underline{3.973e-4} & \it 6.345e-5 + 8.736e-4 & 456.2 & 4.053e-4 & 9.191 \\ 
$0.6$ & 973.8 & 9.766e-4 & \bf 873.0 & \underline{8.038e-4} & \it 6.371e-5 + 7.767e-4 & 912.0 & 8.087e-4 & 11.45 \\ 
$0.7$ & 1968 & 1.922e-3 & \bf 1764 & 1.625e-3 & \it 6.427e-5 + 8.173e-4 & 1792 & \underline{1.612e-3} & 14.65 \\ 
$0.8$ & 4641 & 5.201e-3 & \bf 3904 & \underline{4.200e-3} & \it 6.550e-5 + 8.862e-4 & 4066 & 4.230e-3 & 19.86 \\ 
$0.9$ & 4870 & 7.752e-3 & \bf 4069 & \underline{6.118e-3} & \it 6.352e-5 + 1.051e-3 & 4209 & 6.206e-3 & 30.69 \\
\bottomrule
\end{tabular}
\end{table}

The best performance is highlighted in bold for the number of steps and underlined for computation time (excluding inference overhead). For $n=100$, our learned clique-oriented approach achieves the lowest number of steps for all edge probabilities except $p=0.1$, where random ordering performs marginally better. The reduction in steps becomes more pronounced as edge probability increases, with up to 16.4\% fewer steps compared to random ordering at $p=0.9$.
Our learned clique-oriented approach achieves the fastest execution in 7 out of 9 cases, while degree-based ordering performs best in 2 cases ($p=0.2$ and $p=0.7$). The time savings correlate strongly with the reduction in steps. The clique-oriented method does incur an additional inference cost, consisting of two components: GNN inference ($\approx 6.4 \times 10^{-5}$ seconds) and building a hard permutation using  the Hungarian algorithm ($\approx 9.0 \times 10^{-4}$ seconds). As the edge probability increases from 0.1 to 0.9, all methods show exponential growth in both steps and computation time. However, the clique-oriented approach maintains its relative advantage, with the benefits becoming more significant for denser graphs. To investigate how our method scales with graph size, we conducted additional experiments on larger graphs with $n=200$ vertices, with results shown in Table~\ref{tab:table2}.

\begin{table}[ht]
\centering
\setlength{\tabcolsep}{3.0pt}
\caption{Comparison of MaxCliqueDyn with three orderings (Random, Clique-oriented, and Degree Sort) on random graphs with $n=200$ vertices and varying edge probabilities $p$. Each value represents the average over 100 random instances. We report the number of steps and computation time (in seconds) for each algorithm. The Clique-oriented approach includes an additional inference overhead. The maximum clique size $\omega$ is reported in the last column.}
\label{tab:table2}
\begin{tabular}{@{}lllllllllll@{}}
    \toprule
 & \multicolumn{2}{c}{Random} & \multicolumn{3}{c}{Clique-oriented} & \multicolumn{2}{c}{Degree Sort} & \multicolumn{1}{c}{} \\ 
$p$ & Steps & Time (s) & Steps & Time (s) & Inference (s) & Steps & Time (s)& $\omega$ \\  \midrule   
$0.1$ & 2.040e+2 & \underline{2.301e-4} & \bf 2.001e+2 & 2.306e-4 & \it 8.138e-5+5.285e-3 & 2.031e+2 & 2.398e-4 & 4.209 \\
$0.2$ & 3.505e+2 & 3.645e-4 & \bf  3.236e+2 & \underline{3.551e-4} & \it 8.143e-5+5.588e-3 & 3.270e+2 & 3.635e-4 & 5.881 \\ 
$0.3$ & 9.182e+2 & 8.213e-4 & \bf 
 8.426e+2 & \underline{7.441e-4} & \it 8.100e-5+3.840e-3 & 8.554e+2 & 7.536e-4 & 7.096 \\ 
$0.4$ & 2.196e+3 & 2.472e-3 & \bf 
 2.115e+3 & 2.306e-3 & \it 8.289e-5+5.446e-3 & 2.220e+3 & \underline{2.257e-3} & 8.959 \\
$0.5$ & 6.492e+3 & 8.260e-3 & \bf 
 6.119e+3 & \underline{7.427e-3} & \it 8.097e-5+5.309e-3 & 6.233e+3 & 7.455e-3 & 11.02 \\ 
$0.6$ & 2.818e+4 & 3.692e-2 & \bf 
 2.651e+4 & \underline{3.270e-2} & \it 8.133e-5+4.527e-3 & 2.703e+4 & 3.281e-2 & 13.88 \\ 
$0.7$ & 1.372e+5 & 1.934e-1 & \bf 1.277e+5 & \underline{1.717e-1} & \it 8.143e-5+6.426e-3 & 1.299e+5 & 1.729e-1 & 18.05 \\ 
$0.8$ & 1.288e+6 & 2.199e+0  & \bf 1.182e+6 & \underline{1.948e+0}  & \it 8.191e-5+6.146e-3 & 1.248e+6 & 2.001e+0   & 25.20 \\ 
$0.9$ & 1.435e+7 & 4.076e+1 & \bf 
 1.209e+7 & \underline{3.274e+1} & \it 8.132e-5+5.250e-3 & 1.252e+7 & 3.360e+1 & 41.27 \\
\bottomrule
\end{tabular}

\end{table}

The performance advantage of the clique-oriented ordering becomes more pronounced as both graph size and density increase. In larger graphs with $n=200$ vertices, the results are shown in Table~\ref{tab:table2}. Our clique-oriented ordering consistently achieves the lowest number of steps across all edge probabilities, with improvements becoming particularly significant on denser graphs. For sparse graphs ($p=0.1$), the clique-oriented approach shows a modest improvement, reducing steps by 1.9\% compared to random ordering (from $2.040 \times 10^2$ to $2.001 \times 10^2$). This advantage over random ordering grows substantially with edge probability: at $p=0.6$, steps are reduced by 5.9\% (from $2.818 \times 10^4$ to $2.651 \times 10^4$), and at $p=0.9$, the improvement reaches 15.7\% (from $1.435 \times 10^7$ to $1.209 \times 10^7$). Compared with degree-based ordering, the  clique-oriented approach reduces 1.9\% at $p=0.6$ and 3.4\% at $p=0.9$. 
The computation time shows similar trends, with the clique-oriented approach achieving the fastest execution (excluding inference overhead) in 7 out of 9 cases. The time savings become most significant on dense graphs. This substantial improvement more than compensates for the small, constant inference overhead---approximately $8.1 \times 10^{-5}$ seconds for neural network inference plus $5.3 \times 10^{-3}$ seconds for permutation computation. 

It should be noted that, at $p=0.8$ and $p=0.9$, even when including the inference time overhead, our clique-oriented ordering achieves lower total computation time compared to degree-based sorting\footnote{\textbf{Note:} In our implementation, we use \texttt{scipy.optimize.linear\_sum\_assignment} to obtain the final hard permutation. We also evaluated the open-source CUDA implementation of the batched linear assignment solver proposed in \cite{karpukhin2024hotpp}, which significantly accelerates the hard permutation decoding process  using a large batch size.}  Specifically, on $p=0.9$, when we run the inference on our CPU (Intel Xeon Gold 6154), it  has an average inference time of $\approx 0.04$ seconds, making total execution time for clique-oriented ordering at $p=0.9$ approximately 32.7 seconds, while degree-sort ordering takes 33.6 seconds, resulting in a 2.6\% improvement. This demonstrates that even in a CPU-only environment, our clique-oriented ordering outperforms degree-based ordering. This advantage becomes more pronounced as graph size and density increase, making our method of great practical value. Our results suggest that our UL model successfully captures important structural information that can guide more efficient BnB search.

\section{The Learned Clique-oriented Ordering}
\begin{figure}[ht]
    \centering
    \subfigure[Random]{      \includegraphics[width=0.31\textwidth]{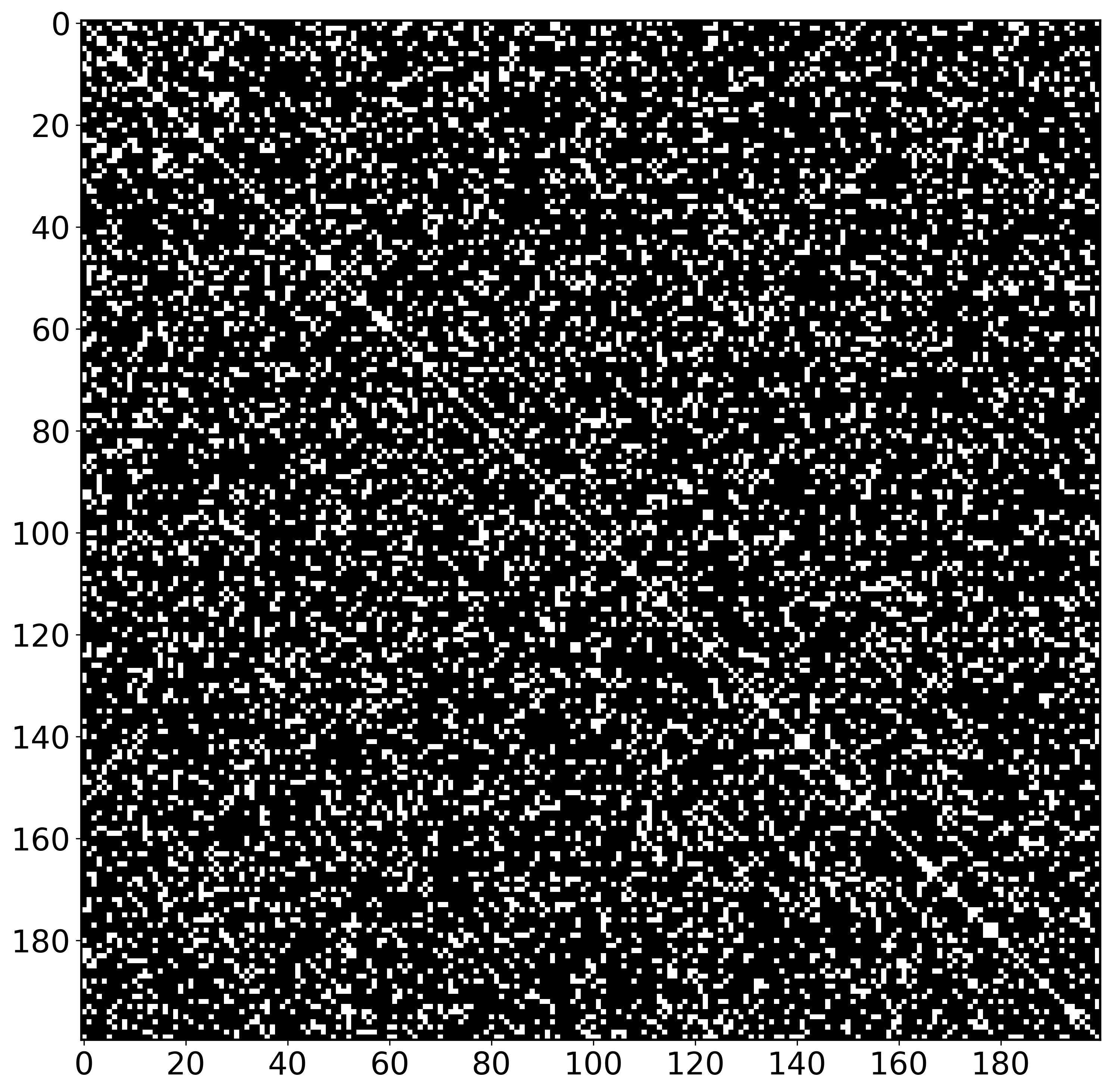}
    }
    \subfigure[Clique-oriented (UL)]{      \includegraphics[width=0.31\textwidth]{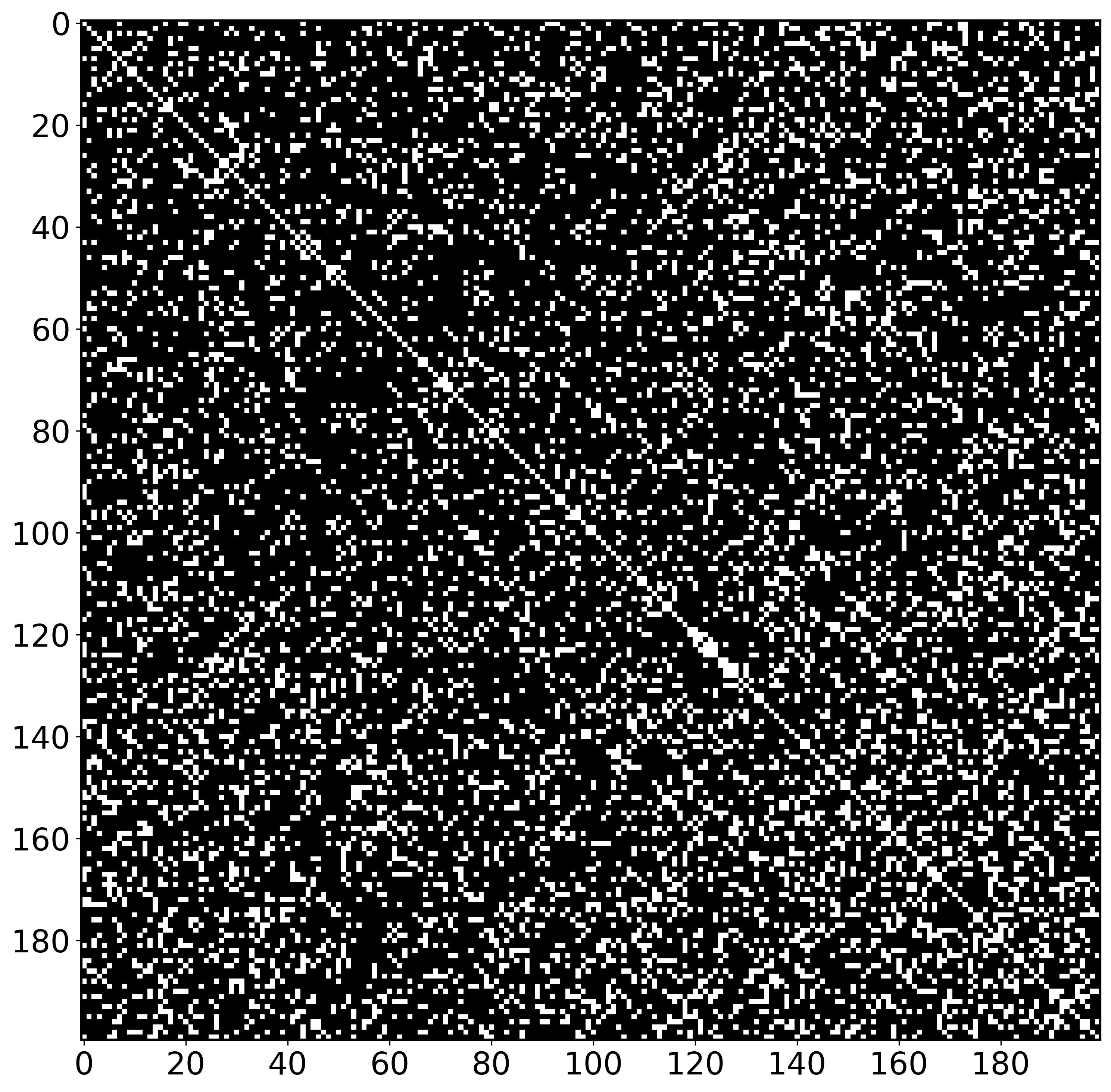}
    }
    \subfigure[Degree-sorted]{   \includegraphics[width=0.31\textwidth]{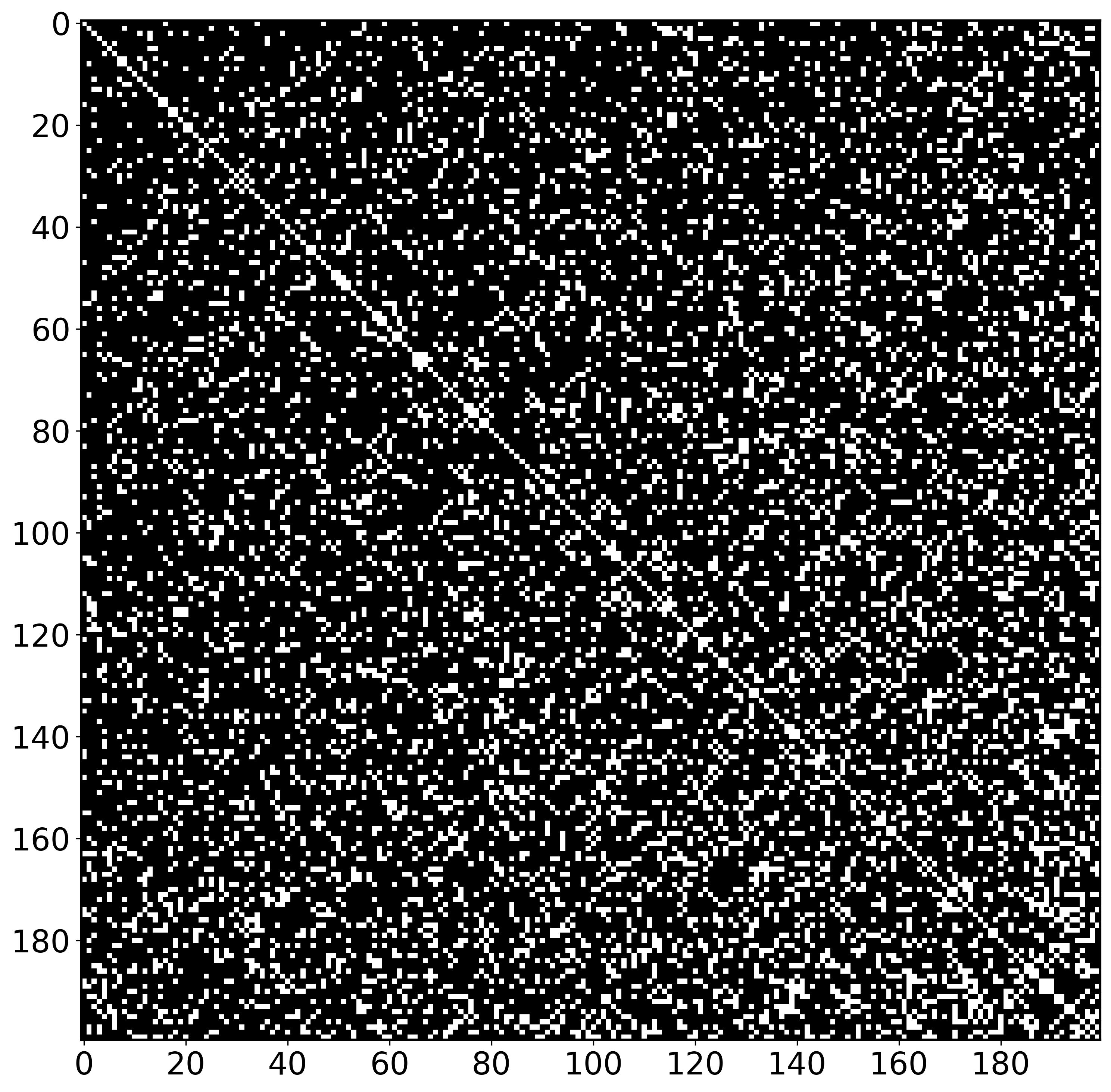}
    }
    \caption{Adjacency matrix visualization of the graph: (a) random ordering, (b) clique-oriented ordering, and (c) matrix sorted by  non-increasing degree.}
    \label{fig:adjacency_matrices}
\end{figure}

\begin{figure}[htbp]
    \centering
    \subfigure[Random]{      \includegraphics[width=0.31\textwidth]{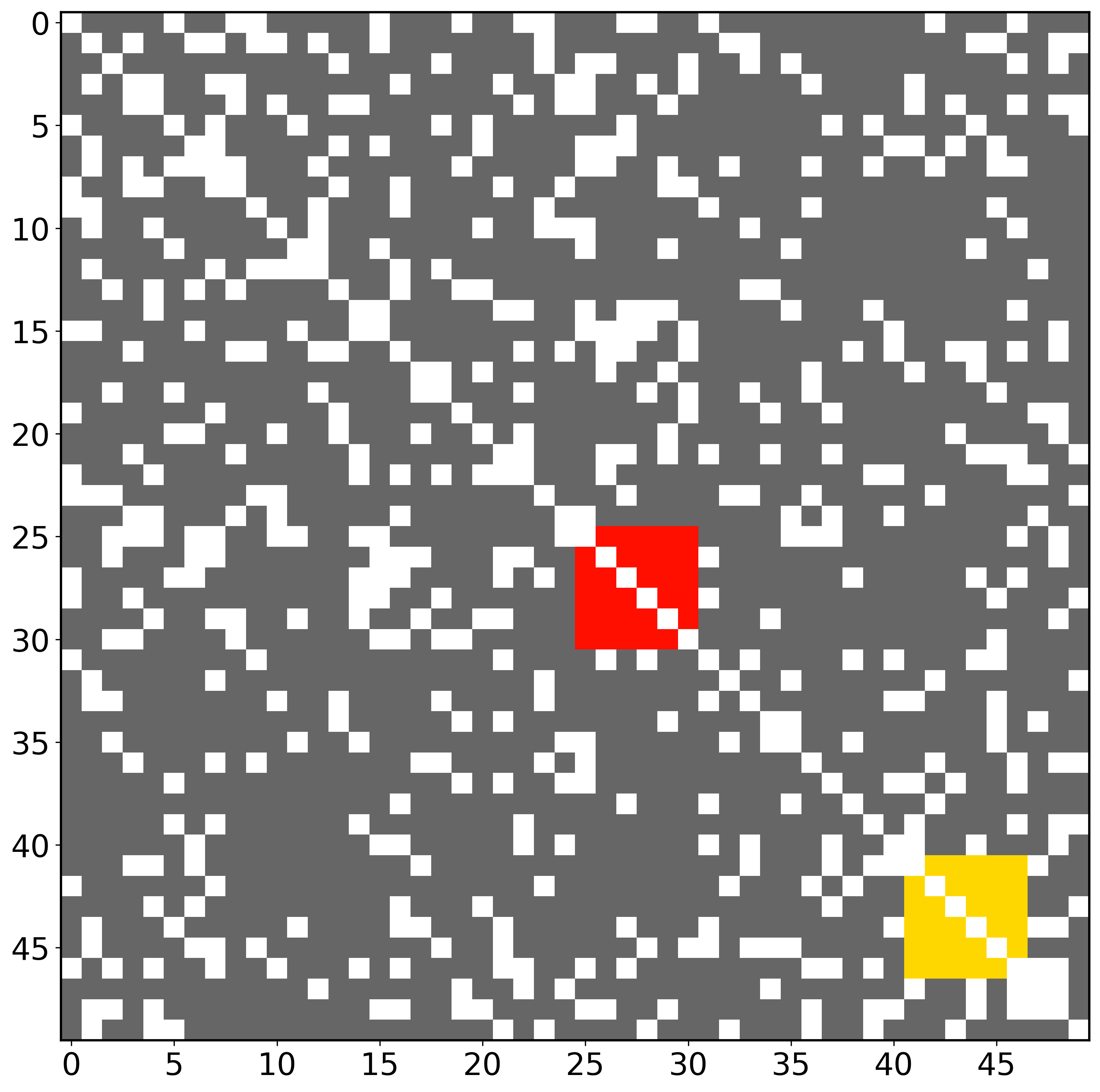}
    }
    \subfigure[Clique-oriented (UL)]{      \includegraphics[width=0.31\textwidth]{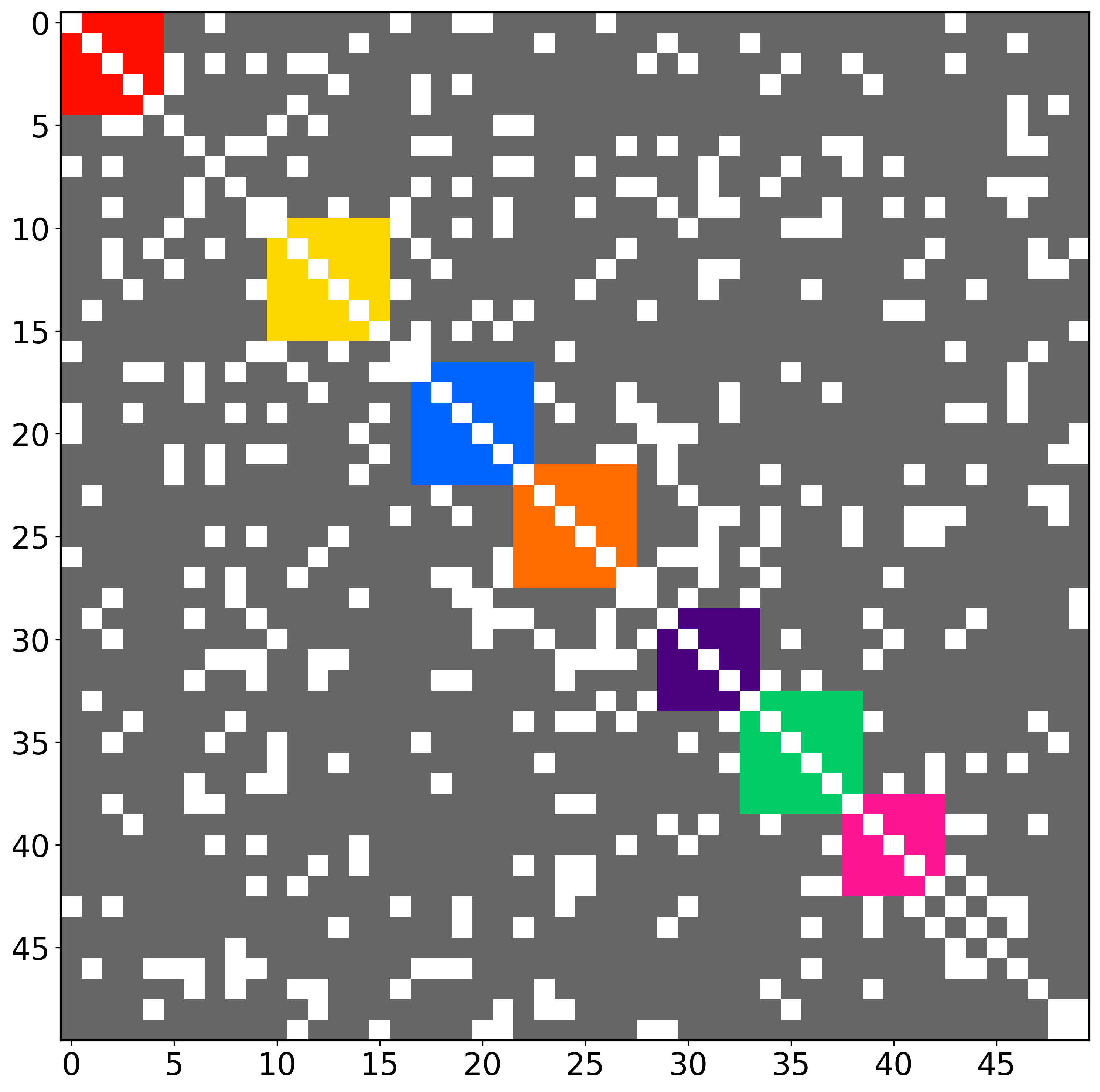}
    }
    \subfigure[Degree-sorted]{   \includegraphics[width=0.31\textwidth]{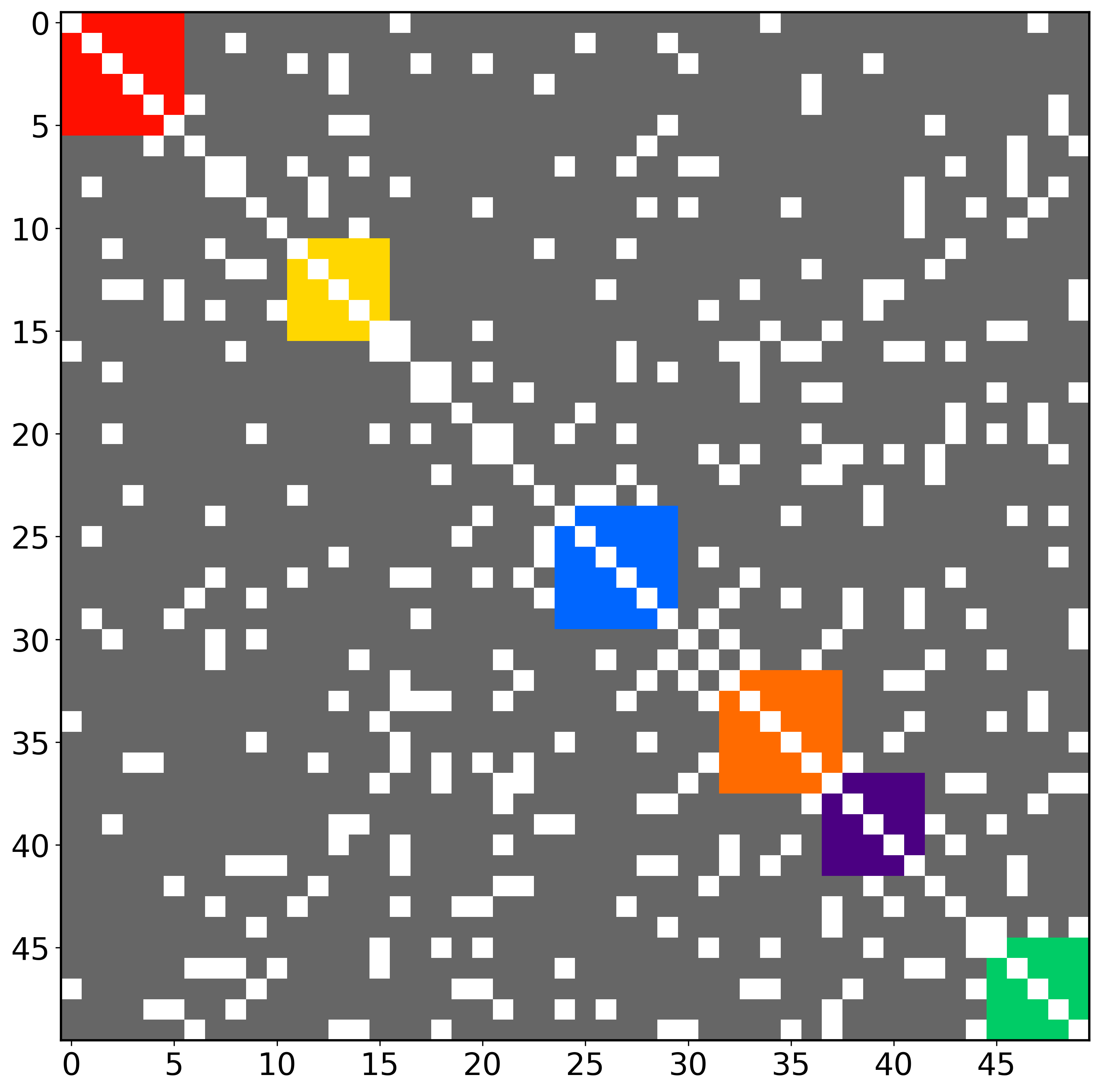}
    }
    \caption{Adjacency matrix of the first 50 nodes of the graph: (a) random ordering, (b) clique-oriented ordering, and (c) matrix sorted by  non-increasing degree.}
    \label{fig:adjacency_matrices_small}
\end{figure}

To visualize our UL clique-oriented ordering, we select a randomly generated test instance with $n=200$ vertices and edge probability $p=0.8$. Figure~\ref{fig:adjacency_matrices} illustrates different vertex ordering approaches. The random ordering does not show discernible patterns, making it difficult to identify structural properties. Both the clique-oriented and degree-sorted ordering show a concentration of edges in the upper-left region, but with distinct characteristics. The clique-oriented ordering groups clique members together, revealing dense blocks that correspond to strongly connected subgraphs. In contrast, the degree-sorted ordering emphasizes hub-like nodes but makes clique structures less distinguishable, resulting in less distinct dense blocks.

Figure~\ref{fig:adjacency_matrices_small} shows adjacency matrices for the first 50 vertices, highlighting cliques of size $\geq$ 5. The random ordering (a) exhibits minimal clique structures, while both clique-oriented (b) and degree-sorted (c) orderings effectively cluster vertices belonging to cliques. The clique-oriented ordering demonstrates better clique identification, revealing 7 distinct cliques compared to 6 in the degree-sorted ordering, with cliques positioned closer to the upper-left corner. This validates the effectiveness of our UL approach in revealing inherent clique structures through reordering.

\section{Generalization}
To investigate our model's capability to handle varying graph dimensions, we employ a zero-padding strategy for size generalization. Given a graph with $n=190$ nodes and edge probability $p=0.9$, we pad it with 10 dummy nodes to match our training dimension. To ensure similar edge density between training and testing graphs after padding, we train the model on ER random graphs with $n=200$ nodes and edge probability $p=0.81$. Specifically, these dummy nodes have zero feature vectors, and their corresponding entries in the adjacency matrix are also set to zero,  making them isolated nodes. This padding strategy provides a general approach to handle size differences: smaller graphs can be padded to the larger sizes, enabling our model to process arbitrary sizes.
\begin{table}[ht]
\centering
\setlength{\tabcolsep}{3.0pt}
\caption{Comparison of MaxCliqueDyn with three orderings (Random, Generalized Clique-oriented, and Degree Sort) on random graphs with $n=190$ vertices and edge probabilities $p=0.9$. For each algorithm, we report the number of steps taken and computation time in seconds. The size of the largest clique found $\omega$ is shown in the rightmost column.}
\label{tab:table3}
\begin{tabular}{@{}lllllllllll@{}}
    \toprule
 & \multicolumn{2}{c}{Random} & \multicolumn{3}{c}{Generalized Clique-oriented} & \multicolumn{2}{c}{Degree Sort} & \multicolumn{1}{c}{} \\ 
$p$ & Steps & Time (s) & Steps & Time (s) & Inference (s) & Steps & Time (s)& $\omega$ \\  \midrule   
$0.9$ &  7.145e+6 & 1.923e+1  & \bf 5.656e+6 & \underline{1.476e+1}  & \it 8.191e-5+6.146e-3 &  5.886e+6 &  1.523e+1   & 40.46 \\
\bottomrule
\end{tabular}
\end{table}

We use the same training method described in Section~\ref{sec:training} and our results are shown in Table~\ref{tab:table3}. The generalized clique-oriented model performs effectively, requiring $5.656 \times 10^6$ steps and $14.76$ seconds to find the maximum clique in an ER random graph with $n=190$ and $p=0.9$. This result outperforms  both the random algorithm ($7.145 \times 10^6$ steps, $19.23$ seconds) and the degree sort method ($5.886 \times 10^6$ steps, $15.23$ seconds). The additional inference overhead of our method (approximately $6.23$ milliseconds) is negligible compared to the overall computation time, demonstrating that our generalized approach maintains efficiency while handling different sizes.
\section{Conclusion}
In this paper, we demonstrate that UL can be used for reordering, where the resulting reordering reveals underlying combinatorial structures.
Instead of formulating the MCP as a binary classification problem, we encode it using a permutation framework. This approach enables us to learn the ordering of vertices directly, rather than making binary decisions. After decoding the model's output, the clique structures are naturally revealed.  Importantly, reordering and binary classification approaches are not mutually exclusive: while binary classification focuses on direct yes/no decisions about whether nodes belong to the solution, reordering provides a complementary perspective by uncovering the inherent structural relationships between nodes. By integrating both approaches, we can leverage their respective strengths: binary classification's explicit decision-making and reordering's ability to capture structural patterns.

Our experiments with MaxCliqueDyn demonstrated that traditional degree-based ordering in BnB can be improved through UL approaches. As graph size and density increase, our inference overhead becomes proportionally smaller in the total execution time. Notably, on the largest, densest graphs ($n=200, p=0.9$), our approach outperforms degree-based ordering even when accounting for inference time. This demonstrates the practical viability of our UL method, particularly for challenging instances.
Given that MaxCliqueDyn is a representative BnB algorithm and degree-based ordering is widely used in most exact clique solvers, these results suggest the potential for improving exact solvers through learned ordering strategies. In this paper, we only replaced the initial degree-based ordering with our learned clique-oriented ordering. There remain many promising directions for further incorporating clique-oriented ordering into existing algorithms, such as exploring deeper integration of learned clique-oriented methods throughout the search process, beyond just initialization.
\section{Acknowledgement}
This project is partially supported by the Eric and Wendy Schmidt AI
in Science Postdoctoral Fellowship, a Schmidt Futures program; the National Science Foundation
(NSF) and the  National Institute of Food and Agriculture (NIFA); the Air
Force Office of Scientific Research (AFOSR);  the Department of Energy;  and the Toyota Research Institute (TRI).

\bibliographystyle{unsrtnat}
\bibliography{references}
\newpage
\section{Appendix}
The following proof discusses the connection between the Chebyshev distance complement $\overline{C_n}$ in exponential form and the maximum clique problem. Specifically,  \begin{lemma}
When $\mathbf{D}_\text{Clique} = (n^2)^{\overline{C_n}}$ with $\overline{C_n}[i,j] = n - \max(i,j)$, minimizing $\mathcal{L}_{\text{clique}}(P) = \langle P^T (J-I-A) P, \mathbf{D}_\text{Clique}\rangle$ yields the maximum clique.
\end{lemma}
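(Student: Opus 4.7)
The plan is to view $\mathcal{L}_{\text{clique}}(P)$ as a weighted sum over the non-edges of the reordered graph, where the weight at position $(i,j)$ decays so fast with $\max(i,j)$ that a single non-edge near the upper-left dominates every possible contribution from the rest of the matrix. First I would unpack the objective: writing $P$ as the permutation matrix of some $\sigma \in S_n$, one checks that $[P^T(J-I-A)P]_{ij} = 1$ iff $i\neq j$ and $\{\sigma(i),\sigma(j)\}\notin E$, so
\[
\mathcal{L}_{\text{clique}}(P) \;=\; \sum_{\substack{i\neq j\\ \{\sigma(i),\sigma(j)\}\notin E}} (n^2)^{\,n-\max(i,j)}.
\]
Hence $\mathcal{L}_{\text{clique}}$ penalizes non-edges in the reordered adjacency, with by far the largest penalties near the upper-left corner.

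Next I would introduce the key quantity $k(\sigma)$, defined as the largest integer $k$ such that $\{\sigma(1),\dots,\sigma(k)\}$ induces a clique in $G$, and argue that $\mathcal{L}_{\text{clique}}(P_\sigma)$ is a strictly decreasing function of $k(\sigma)$. If $k(\sigma)<k(\sigma')$, then by definition of $k(\sigma)$ there exist indices $i,j$ with $\max(i,j)=k(\sigma)+1$ and $\{\sigma(i),\sigma(j)\}\notin E$, contributing at least $(n^2)^{\,n-k(\sigma)-1}$ to $\mathcal{L}_{\text{clique}}(P_\sigma)$. Meanwhile $\sigma'$ has no non-edge at any position with $\max(i,j)\le k(\sigma')$, so a crude count (there are $2(m-1)$ ordered pairs with $\max(i,j)=m$) gives
\[
\mathcal{L}_{\text{clique}}(P_{\sigma'}) \;\le\; \sum_{m=k(\sigma')+1}^{n} 2(m-1)\,(n^2)^{\,n-m}.
\]
A standard geometric-series estimate bounds the right-hand side by $\tfrac{2n}{n^2-1}(n^2)^{\,n-k(\sigma')} < (n^2)^{\,n-k(\sigma)-1}$ for $n\ge 3$, which is exactly the lower bound already forced on $\mathcal{L}_{\text{clique}}(P_\sigma)$. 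Hence $\mathcal{L}_{\text{clique}}(P_{\sigma'})<\mathcal{L}_{\text{clique}}(P_\sigma)$.

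To finish, I would note that $\max_\sigma k(\sigma)=\omega(G)$: any permutation that places a maximum clique $C^\star$ into positions $1,\dots,|C^\star|$ achieves $k(\sigma)=\omega(G)$, and by definition no $\sigma$ can achieve more. By the previous paragraph, every minimizer $\sigma^\star$ of $\mathcal{L}_{\text{clique}}$ attains $k(\sigma^\star)=\omega(G)$, so reading off $\{\sigma^\star(1),\dots,\sigma^\star(\omega(G))\}$ recovers a maximum clique. The one step that needs genuine care, and that I see as the main obstacle, is the geometric-sum bound showing single-term dominance: one must verify that the exponential base $n^2$ is large enough to overpower the $O(m)$ non-edges that can sit at each level $m$ and the $O(n)$ levels above the current threshold. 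Once that tail bound is in place, the rest of the argument is purely structural.
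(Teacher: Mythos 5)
Your proposal is correct and rests on the same single-term dominance argument as the paper's proof: a single non-edge at position $\max(i,j)=m$ carries weight $(n^2)^{n-m}$, which strictly exceeds the total weight of all entries with $\max(i,j)>m$. The only difference is packaging---you phrase this as monotonicity of the loss in the clique-prefix length $k(\sigma)$ and bound the tail by a geometric series over levels, whereas the paper directly compares a clique-first permutation against any other via the cruder count $(n^2-\omega^2)\cdot(n^2)^{n-\omega-1}<(n^2)^{n-\omega}$; both reduce to the same inequality (and your $k(\sigma)$ formulation handles the case of multiple maximum cliques slightly more cleanly).
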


\begin{proof}
Let $G = (V, E)$ be an undirected graph with adjacency matrix $A$. The matrix $J-I-A$ represents non-adjacent vertex pairs, where $J \in \mathbb{R}^{n \times n}$ is the all-ones matrix and $I \in \mathbb{R}^{n \times n}$ is the identity matrix.

Let $\omega$ be the size of the maximum clique in $G$. We will show that any permutation matrix  that minimizes $\mathcal{L}_{\text{clique}}(P)$ must place the maximum clique in the first $\omega$ positions.

Let $P_1$ be a permutation matrix that places a maximum clique of size $\omega$ in the first $\omega$ positions. The corresponding cost is:

\begin{align}
\mathcal{L}_{\text{clique}}(P_1) &= \sum_{i=1}^{n}\sum_{j=1}^{n} [P_1^T(J-I-A)P_1]_{ij} \cdot (n^2)^{n-\max(i,j)}
\end{align}

Since the first $\omega$ vertices form a clique, we have $[P_1^T(J-I-A)P_1]_{ij} = 0$ for all $1 \leq i,j \leq \omega$. Non-adjacent vertex pairs can only exist in positions where at least one index exceeds $\omega$. For these positions, we have $n-\max(i,j) \leq n-(\omega+1) = n-\omega-1$. Therefore:

\begin{align}
\mathcal{L}_{\text{clique}}(P_1) &\leq \sum_{\substack{i,j \\ \max(i,j) > \omega}} [P_1^T(J-I-A)P_1]_{ij} \cdot (n^2)^{n-\max(i,j)} \\
&\leq \sum_{\substack{i,j \\ \max(i,j) > \omega}} (n^2)^{n-\max(i,j)} \\
&\leq (n^2 - \omega^2) \cdot (n^2)^{n-\omega-1}
\end{align}

Now, let $P_2$ be any permutation matrix that does not place the maximum clique in the first $\omega$ positions. Then at least one vertex from the maximum clique must be placed at position $\omega+1$ or beyond, and at least one non-clique vertex must be placed among the first $\omega$ positions.

Since the first $\omega$ positions cannot contain only clique vertices, there must exist at least one pair of vertices in the first $\omega$ positions that are not adjacent. This non-adjacent pair contributes a value of 1 to $[P_2^T(J-I-A)P_2]_{ij}$ where $\max(i,j) \leq \omega$. The corresponding weight is at least $(n^2)^{n-\omega}$. Therefore:

\begin{align}
\mathcal{L}_{\text{clique}}(P_2) &\geq (n^2)^{n-\omega}
\end{align}

We can now directly compare the bounds:
\begin{align}
\frac{\mathcal{L}_{\text{clique}}(P_1)}{\mathcal{L}_{\text{clique}}(P_2)} &\leq \frac{(n^2 - \omega^2) \cdot (n^2)^{n-\omega-1}}{(n^2)^{n-\omega}} \\
&= \frac{n^2 - \omega^2}{n^2} \\
&= 1 - \frac{\omega^2}{n^2} < 1
\end{align}

This implies $\mathcal{L}_{\text{clique}}(P_1) < \mathcal{L}_{\text{clique}}(P_2)$ for any permutation $P_2$ that does not place the maximum clique in the first $\omega$ positions. Therefore, any permutation matrix that minimizes $\mathcal{L}_{\text{clique}}(P)$ must place the maximum clique in the first $\omega$ positions.
\end{proof}

\end{document}